\title{\LARGE \bf
Event Driven CBBA with Reduced Communication }
\author{Vinita Sao, Tu Dac Ho, Sujoy Bhore and P.B. Sujit 
\thanks{This work is partially supported by UGC-JRF and partially supported by India-Russia DST-RSF grant. }
\thanks{Vinita Sao and P.B. Sujit are with the Department of Electrical Engineering and Computer Science, IISER Bhoapal, Bhopal -- 462066, India.}%
\thanks{Tu Dac Ho is with these affiliations: the Department of Information Security and Communication Technology (NTNU), Trondheim, Norway; and the Department of Electrical Engineering- The Arctic University of Norway (UiT).}
\thanks{Sujoy Bhore is with the Department of Computer Science, IIT Bombay, Mumbai, India.}%
}
\newtheorem{statement}{Statement}
\newtheorem{lemma}{Lemma}
\begin{document}

\maketitle
\thispagestyle{empty}
\pagestyle{empty}

\begin{abstract}
In various scenarios such as multi-drone surveillance and search-and-rescue operations, deploying multiple robots is essential to accomplish multiple tasks at once. Due to the limited communication range of these vehicles, a decentralised task allocation algorithm is crucial for effective task distribution among robots. The consensus-based bundle algorithm (CBBA) has been promising for multi-robot operation, offering theoretical guarantees. However, CBBA demands continuous communication, leading to potential congestion and packet loss that can hinder performance. In this study, we introduce an event-driven communication mechanism designed to address these communication challenges while maintaining the convergence and performance bounds of CBBA. We demonstrate theoretically that the solution quality matches that of CBBA and validate the approach with Monte-Carlo simulations across varying targets, agents, and bundles. Results indicate that the proposed algorithm (ED-CBBA) can reduce message transmissions by up to 52\%.
\end{abstract}

\section{Introduction}
\begin{figure}
    \centering
    \subfloat[]{%
        \includegraphics[scale=0.35]{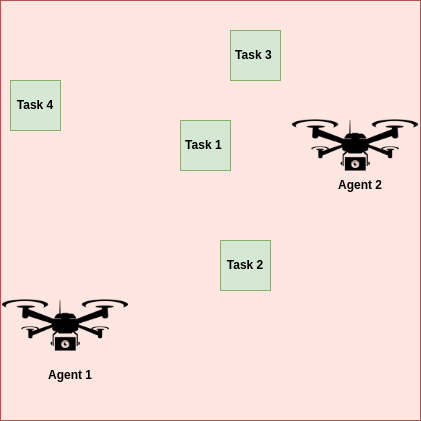}%
        \label{fig:task_agent}%
        }\hspace{0.5cm}
    \subfloat[]{%
        \includegraphics[scale=0.35]{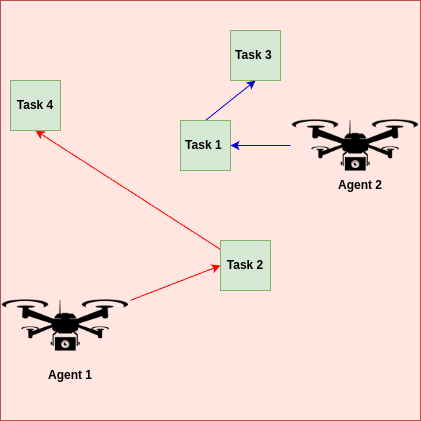} %
        \label{fig:allocation}%
        }%
    \caption{Task Allocation in Multi-UAV Systems: (a) Initial configuration with 4 tasks and 2 agents. (b) Allocation of tasks for the two vehicles.}\label{intro}
\end{figure}     
Recent technological advancements have rendered Unmanned Aerial Vehicles (UAVs) more economically viable, facilitating their widespread adoption across various fields. UAVs vary in size, with larger models used for rescue missions and smaller ones deployed in swarms for purposes like surveillance, data gathering \cite{dawson2021requirements}, agriculture \cite{basiri2022survey}, and defence applications such as reconnaissance and striking \cite{huang2016development, chen2022defense}. In systems involving multiple UAVs, coordinating diverse UAV types allows for leveraging their varied capabilities to efficiently manage tasks with different requirements. Nonetheless, a significant challenge in these systems is the optimal distribution of tasks among the UAVs to maximise their overall mission effectiveness. This problem, known as Multi-Robot Task Allocation (MRTA), is NP-Hard and remains complex \cite{gerkey2004formal, mrta_review}, even with a limited number of UAVs. Additionally, MRTA's complexity is compounded by factors such as fuel limitations, computing resources, network bandwidth, and evolving task demands.

In the scenario depicted in Figure \ref{fig:task_agent}, multiple tasks are present in the environment alongside UAVs (Unmanned Aerial Vehicles). The goal is to assign a sequence of tasks to each UAV in a manner that maximises the overall global score. As shown in Figure \ref{fig:allocation}, each UAV is assigned two tasks. A distributed allocation approach is preferred because it not only improves the global score but also balances the workload among the UAVs, preventing any single agent from being overloaded.

To address the Multi-Robot Task Allocation (MRTA) problem, various centralised approaches have been considered. One such approach is Mixed Integer Linear Programming (MILP)~\cite{jung2025milp, veeraswamy2025optimal}, which can yield optimal solutions. However, it does not scale well as the number of agents or tasks increases. Meta-heuristic techniques like Ant Colony Optimisation (ACO), Particle Swarm Optimisation (PSO), and Genetic Algorithms (GA) can provide near-optimal solutions while requiring less computational power compared to MILP approaches \cite{gao2021multi, zhou2024cooperative, wu2021multi, 9483937}. Despite their lower complexity, these techniques are typically centralised and encounter similar scalability and flexibility issues as the MILP methods. Consequently, there is an increasing demand for decentralised methods that can solve the MRTA problem more efficiently and robustly.
 
In contrast, market-based approaches have gained popularity for their adaptability in dynamic environments and their ability to provide efficient task allocation strategies \cite{gerkey2004formal, review_market}. These approaches include several variants, such as the Contract-Net Protocol \cite{1308816, smith1980contract, tiejun2007study, cervera2008agent}, simple auctions~\cite{gerkey2002sold, choi2009consensus, hoeing2007auction}, combinatorial auctions \cite{segui2014combinatorial}, and negotiation mechanisms \cite{sujit2006multiple, viguria2007set}, among others. Market-based methods have been extensively explored over the past two decades. Due to space limitations, we will only review the most relevant works here. For a more comprehensive overview of task allocation in market-based approaches, interested readers are referred to the detailed review articles and the citations therein \cite{gerkey2004formal, review_market}. Consensus-based approaches are also widely used as decentralised solutions in multi-UAV applications~\cite{9476858,review_market}. However, achieving convergence on a consistent situational awareness (SA) requires significant time and data. In this context, the Consensus-Based Bundle Algorithm (CBBA)~\cite{choi2009consensus} algorithm leverages both auction and consensus phases to reduce the computational load and ensure convergence in a connected network topology, even when agents have inconsistent SA. Unlike consensus-based methods that require consistent SA, CBBA allows agents to communicate their plans along with their local SA. Its auction mechanism naturally leads to a conflict-free allocation under these constraints. This two-phase approach ensures conflict-free task assignments with theoretical guarantees. 

Several variants of CBBA have been developed over the years to address different constraints, including coupled constraints \cite{ye2021decentralized}, asynchronous communication \cite{acbba}, and heterogeneous tasks and agents \cite{cbba_dynamic_communication, cooperation-constraints}. Building on these variants of the CBBA approach, several issues related to the MRTA problem have been addressed. However, a critical challenge remains relatively unexplored, namely, the impact of limited communication bandwidth and data packet collisions, which can cause extra delays in reaching a consensus, thereby hindering the timely arrival of a conflict-free assignment. In addition, the more frequently data is transmitted by the agents, the higher the possibility of failure and errors. In this paper, we propose an improved variant of the celebrated CBBA algorithm by developing an event-based communication strategy to reduce data communication. We refer to this variant as \textbf{ED-CBBA}, which stands for event-driven CBBA. Raja et al.~\cite{ca-cbba} introduced a communication-efficient strategy based on a deep reinforcement learning framework known as CA-CBBA. In this approach, an agent only broadcasts if it has a higher priority than its neighbours. However, this framework requires significant computational resources to learn the priority for communication reduction. In contrast, the event-driven communication strategy transmits messages only when the agent's information changes, thereby reducing unnecessary communication without requiring additional computational resources. We theoretically demonstrate that ED-CBBA achieves the same conflict-free task allocation as the original CBBA algorithm, while offering improved communication efficiency.

\section{Problem Formulation}
We consider a set of \(N_a\) autonomous agents  \(\mathcal{A}= \{1,\dots, N_a\}\) and set of \(N_t \) tasks \(\mathcal{T}= \{1,\dots, N_t\}\). We aim to assign tasks to agents such that each task is assigned to at most one agent, and each agent is assigned no more than \(K\) tasks. A set of \(K\) tasks can be defined as the bundle \(b_i\) for the agent \(i \in \mathcal{A}\). The objective is to maximise the total score given as: 
\begin{align}
\text{Objective: } \max & \sum_{i=1}^{N_a} \left(\sum_{j=1}^{N_t} c_{ij}( p_i) x_{ij}\right) \label{eq:objective_function}\\ 
\textbf{Subject to:} & \\ &\sum_{i=1}^{N_a} x_{ij} \leq 1, \quad \forall j \in \mathcal{T} \label{eq:constraint1}\\ &\sum_{j=1}^{N_t} x_{ij} \leq K, \quad \forall i \in \mathcal{A} \label{eq:constraint2}\\ &\sum_{i=1}^{N_a} \sum_{j=1}^{N_t} x_{ij} = N_{min}, \quad \label{eq:constraint3}
\end{align}
where  $N_{min} \triangleq \min \{N_t, K N_a\}$, \(x_{ij} \in \{0, 1\} \quad \forall (i,j) \in \mathcal{A} \times \mathcal{T}\). In equation \eqref{eq:objective_function}, \(c_{ij}\) represents the score for task \(j\) assigned to agent \(i\), while \(p_i\) denotes the ordered list of tasks which we will refer to as a path. The decision variable \(x_{ij}\) indicates whether agent \(i\) has been assigned task \(j\) (1 if assigned, otherwise 0).
Equation \eqref{eq:constraint1} ensures conflict-free assignment by guaranteeing that each task is assigned to at most one agent. To limit each agent to a maximum of \(K\) tasks, equation \eqref{eq:constraint2} is applied. Finally, equation \eqref{eq:constraint3} specifies that a maximum number of \(N_{min}\) tasks can be assigned, which is determined by the minimum of the total number of tasks or the total number of tasks that can be accommodated based on the agents' capacities. 

CBBA is an iterative process consisting of two phases. In the first phase, known as the auction phase, agents select tasks based on a bidding process. In the second phase, called the consensus phase, agents share their knowledge about tasks with their neighbours. The process iterates between these two phases until convergence is achieved; convergence refers to a conflict-free task allocation among all agents.

\subsection{Auction Phase}

In this phase, agents use a scoring function that follows the principle of Diminishing Marginal Gain (DMG) to calculate bids on tasks. According to DMG, as agents continue adding tasks to their list, the marginal gain of each additional task decreases monotonically. Because the value of each new task decreases, it encourages the tasks to be distributed among agents rather than one agent taking many.

Using the scoring function, agents bid on tasks and manage several lists. Each agent \( i \in A \) keeps a \textit{bundle} \( b_i \), which is an unordered set of tasks. Since the goal is to determine the sequence of tasks for each agent, a \textit{path} \( p_i \), represented as an ordered list, is recorded. A list \( y_i \) of length \( N_t \) is used to store the bids on each task. A separate list \( z_i \) is used to record the winner for each task. Additionally, a list \(s_i\) is maintained to record timestamps, allowing agents to track when information is received, ensuring that they are up-to-date.

\begin{equation}
\hspace{-6em}c_{ij}(b_i) = 
\begin{cases}
    \text{0, } &  \hspace{-8em}
    \text{if } j \in b_i \\
    
    \max_{n \leq |p_i| + 1} \left( S_i^{p_i \oplus_n j} - S_i^{p_i} \right ) , Otherwise
    
\end{cases}\label{score}
\end{equation}

In equation \eqref{score}, \(S_i^{p_i}\) gives the score for each task in the bundle \(b_i\) for agent \(i\) following the path \(p_i\) and the operator \(\oplus_n\) denotes that the task is added at the \(n\)-th position in \(p_i\) list. If the task is already in the bundle, the score will be 0; otherwise, the agent will calculate the best index \(n\), which is the position where adding task \(j\) results in the highest total score of the bundle. After bundle construction, agents perform consensus.

\subsection{Consensus Phase}\label{consensus}

\begin{figure}
    \centering
    \subfloat[]{%
        \includegraphics[scale=0.26]{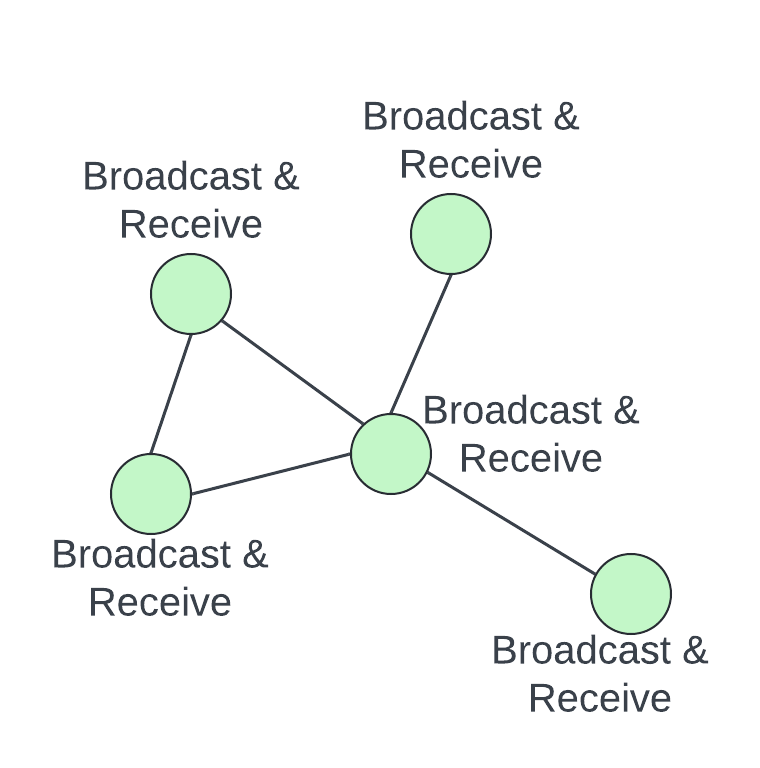}%
        \label{fig:cbba}%
        }%
    \subfloat[]{%
        \includegraphics[scale=0.25]{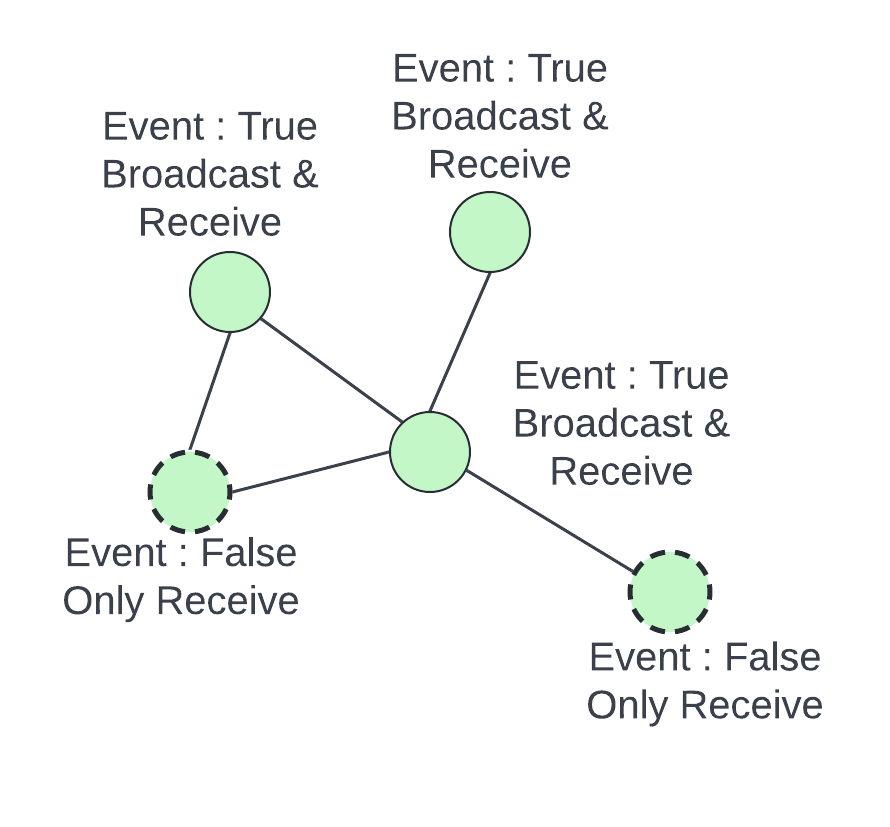} %
        \label{fig:eventdriven}%
        }%
    \caption{Schematic depicting communication mechanism in (a) CBBA, (b) ED-CBBA.}
\end{figure}

This is the second phase of the iteration, where agents share and receive information with their neighbours to get a conflict-free assignment. We can define the agents as the nodes of the graph \(G(\tau)\) to check their local network. To check the neighbourhood at real-time, \(g_{ik}(\tau)\) is used; if  \(g_{ik}(\tau) = 1,\) then agents are neighbours at time \(\tau\), otherwise not. An agent can have multiple neighbours depending on the communication topology. When all neighbours broadcast simultaneously, network congestion increases, leading to packet loss and message corruption \cite{ca-cbba}. Additionally, broadcasting and receiving messages in each iteration along with the associated processing consumes significant energy, reducing the flying time of the vehicle. The loss of information during communication results in delays, causing agents to take more time to reach a consensus.

To address this issue, we introduce event-driven communication, which we call ED-CBBA. In ED-CBBA, agents will broadcast only when the event occurs; otherwise, they will receive the information. In Figure \ref{fig:cbba} and \ref{fig:eventdriven}, the nodes represent agents, and two agents are considered neighbours if there is a direct edge between them. In Figure \ref{fig:cbba}, it is shown that in CBBA, all agents send messages, resulting in a network that is overloaded with messages, which ultimately affects the consensus phase. In Figure \ref{fig:eventdriven} for ED-CBBA, only agents for which the event condition is true send messages, while others only receive. This approach reduces the number of message packets exchanged in each iteration without compromising the information of neighbouring agents. Consequently, this minimises redundant message processing, saving both battery and time.
Algorithm 1 shows the consensus process for the agent \(i \in \mathcal{A}\) at some iteration \(t\). Before the broadcast, at line 1, the agent will check for the occurrence of the event as follows : 
\begin{equation}
    \text{event(t)} =
    \begin{cases}
    \text{True,} & \text{if } y_i(t) \neq y_i(t-1)\\
                & \phantom{\text{if }} \text{or } z_i(t) \neq z_i(t-1)\\
    \text{False,} & \text{otherwise}.
    \end{cases}   
\end{equation}

If  \(event(t) = True\), it means an agent has some updated information to share. Here, updated information refers to any change in the bid list or the winner list. If \( event(t) = \text{False} \), the agent will only receive information from its neighbours. Since the agent is not sharing information because the information it has at iteration \( t \) has already been broadcast at iteration \( t-1 \) or in previous iterations, transmitting it again will not affect the neighbours' knowledge. At line 2, the agent identifies its neighbours at time \(\tau\); based on the received information from the neighbours, the agent performs some operations for each task \(j \in \mathcal{T}\):

\subsubsection{\textbf{Update}} If the winner of a task differs between the agent and the sender, the agent checks whether another agent has overbid the task. If so, and if the agent itself has the task, it will discard the task from its bundle and update its information accordingly. Since the tasks in an agent's bundle depend on their assigned paths, removing a task invalidates the scores of subsequent tasks. As a result, all tasks following the discarded one must also be removed from the bundle.

\subsubsection{\textbf{Reset}} If no clear information about a task is available, the agent resets its corresponding values to null. Additionally, if the task is present in the agent's bundle, the agent discards it along with all subsequent tasks in the bundle.

\subsubsection{\textbf{Leave}} When an agent has the same or more recent knowledge than the sender, it simply leaves the information about that task unchanged.

After the completion of this phase, agents will have information about their neighbours. Accordingly, they will update their bundle, bid, and winner lists. These two phases will continue to iterate until a conflict-free allocation is achieved.

\begin{algorithm}  
\hrulefill
\caption{Conflict Resolution Phase: for Agent \(i\) at Iteration \(t\).}

\SetAlgoLined 
\KwIn{ \( y_k(t), z_k(t), s_k(t) \quad \text{for} \quad g_{ik}(\tau) = 1 \quad \forall k \in \mathcal{A} \) }
\KwOut{Agent \(i\) update lists based on received information.\;}

1. \If{\(\text{event(t)} = \text{true}\)}{
    Agent \(i\) broadcasts its information lists\;
}
2. \For{\(k \in \mathcal{A}\) where \(g_{ik}(\tau) = 1\)}{
    Receive information from neighbor \(k\)\;
    
    3. \For{\(j \in \mathcal{T}\)}{
        Based on the received data for each task \(t\), perform the following operations:
        
        \textbf{Update:} \(y_{ij} = y_{ik}, z_{ij} = z_{ik} \)\;
        
        \textbf{Reset:} \(y_{ij} = 0, z_{ij} = \phi\)\;
        
        \textbf{Leave:} \(y_{ij} = y_{ij}, z_{ij} = z_{ij}\)\;
    }
}
\end{algorithm} 
\hrulefill

 \section{Convergence Analysis}
A crucial aspect of convergence in MRTA is the property of Diminishing Marginal Gain (DMG). To demonstrate convergence, \cite{choi2009consensus} assumes that the network is static and connected, and the consensus process is synchronised.

When the event-driven approach is introduced, the convergence property is maintained. Since the scoring scheme satisfies the DMG property (as proven in \cite{choi2009consensus}), it holds when the network remains static and connected. The consensus process continues to be synchronised; ED-CBBA preserves the convergence. To see that \cite{choi2009consensus} has proved some statements after the completion of some iteration \(t\) for the agent \(i^*\) and the task \(j^*\) bid value \(y_{i^*j^*}(t) = c_{i^*j^*}\) and the winner for the task \(z_{i^*}j^* = i^*\) than we can say under DMG scoring scheme:
\begin{statement}
 The bid agent \(i^*\) places on the task \(j^*\) will be the highest bid over all the bids on the task at iteration \(t\): \[
     c_{i^*j^*} \geq y_{ij}(t) \quad \forall (i,j) \in \mathcal{A} \times \mathcal{T} \]
    \end{statement}
    \begin{statement}
 Entries do not change over time: \[
    z_{i^*}j^*(s) = z_{i^*}j^*(t) \quad y_{i^*}j^*(s) = y_{i^*}j^*(t), \quad \forall s \geq t
    \] 
      \end{statement}
      
     \begin{statement}
  The bid agent \(i^*\) places on the task \(j^*\) will remain the same throughout, and this can not be overbid by the other agents:\[
    y_{i^*}j^*(s) = y_{i^*}j^*(t) >= y_{i}j^*(t) \quad \forall s \geq t, \quad \forall i \in \mathcal{A}
    \] 
      \end{statement}
 
     \begin{statement}
After \(D\) iterations; every agent will have agreed on the assignment \(i^*, j^*\): \[
 z_{i^*}j^*(t+D) = i^* \quad  y_{i^*}j^*(t+D) = y_{i^*}j^*(t) \]
      \end{statement}
      
All four statements hold when the scoring scheme satisfies the DMG property. To prove that ED-CBBA preserves convergence with a reduced number of communications, we will show that by modifying the conflict resolution phase, the information across the agents will not change, and hence, it will not affect the allocation.

\begin{lemma}
    The Event-Driven Consensus-Based Bundle Algorithm (ED-CBBA) converges with an upper bound of \(N_{\min}D\) when the scoring scheme follows the DMG principle, the communication network is static and connected with diameter \(D\), and the conflict resolution phase is synchronous:
    \[
    T_c \leq N_{\min}D
    \]
\end{lemma}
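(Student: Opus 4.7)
The plan is to reduce the analysis to the existing convergence proof of CBBA by arguing that ED-CBBA and CBBA induce the same sequence of bid and winner lists on each agent. Once that correspondence is established, the four Statements quoted above transfer verbatim: Statements 1--3 give the stability of the top bid on any task $j^*$ under DMG, Statement 4 gives a $D$-iteration propagation bound for the winning assignment, and iterating this over the tasks that become permanently assigned yields the overall bound $T_c \leq N_{\min}D$ exactly as in \cite{choi2009consensus}.

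The technical core is a \emph{communication-invariance} claim that I would prove by induction on the iteration index $t$. The invariant reads: for every agent $i$ and every neighbor $k$ of $i$, the pair $(y_i, z_i)$ most recently delivered from $i$ to $k$ coincides with the current pair $(y_i(t), z_i(t))$. The inductive step is immediate from the event trigger: if $i$ suppresses its broadcast at iteration $t$, it is precisely because $y_i(t)=y_i(t-1)$ and $z_i(t)=z_i(t-1)$, so the message that CBBA would have sent is bit-identical (up to the timestamp) to one that has already been received and processed by every neighbor. Therefore, at iteration $t$, the inputs that each agent $k$ feeds into the Update / Reset / Leave logic of Algorithm~1 are equivalent to those it would have received under CBBA, and the resulting updates to $(y_k(t+1), z_k(t+1))$ are identical. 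An induction on $t$ then yields trajectory-level equivalence of the two algorithms.

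With trajectory-level equivalence in hand, Statements 1--4 hold for ED-CBBA without change, so after each block of $D$ synchronous iterations at least one additional task is irrevocably assigned to its maximum-bidding agent; chaining $N_{\min}$ such blocks gives $T_c \leq N_{\min}D$. The main obstacle I anticipate is the handling of the timestamp vector $s_i$: under CBBA a neighbor's view of $s_i$ is refreshed every iteration, whereas under ED-CBBA it lags behind until $i$ next broadcasts. I would argue this is harmless by observing that the Update / Reset / Leave decision table uses $s_i$ only as a tiebreaker among differing $(y_i, z_i)$ values, so when those values are unchanged the stale timestamp cannot flip the branch taken; equivalently, one reinterprets $s_i$ as the time of last \emph{change}, rather than last transmission, which is the natural semantics for an event-driven broadcast. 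Once this point is dispatched, the invariant closes and the bound follows.
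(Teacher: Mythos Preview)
Your proposal is sound in spirit but takes a genuinely different route from the paper. The paper argues \emph{directly} about the propagation of the single winning pair $(i^*,j^*)$: once $i^*$ fixes its bid it stops broadcasting, but any neighbor that receives this information for the first time necessarily has $\text{event}=\text{True}$ (its $y$- or $z$-list just changed), so that neighbor broadcasts in the next round and pushes the information one hop further; after $D$ hops the whole network agrees, and iterating over the $N_{\min}$ tasks gives the bound. The paper never claims that ED-CBBA and CBBA produce identical trajectories---only that the winning information still percolates at one hop per iteration.

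Your approach is more ambitious: full trajectory equivalence with CBBA, after which the bound follows by direct citation of \cite{choi2009consensus}. If it goes through, it yields a stronger conclusion (the final allocation and the iteration count are literally identical, not merely bounded the same way) and is arguably cleaner. However, the inductive step needs more care than you give it. The assertion that ``the inputs $k$ feeds into the Update/Reset/Leave logic are equivalent'' is not literally true---under ED-CBBA $k$ receives strictly fewer messages---so what you actually need is that applying the conflict-resolution table to an \emph{unchanged} message from $i$ is a no-op on $k$'s state \emph{even after $k$'s state has been altered by other neighbors' messages and by $k$'s own auction phase in the interim}. You correctly flag the timestamp component of this, but the same idempotence check must be made for the bid/winner entries themselves: for instance, if $k$ reset $z_{kj}$ at iteration $t$ because of a third agent, would re-receiving $i$'s stale message at $t$ under CBBA trigger an Update that ED-CBBA silently skips? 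This can be verified row-by-row from the CBBA decision table, but it should be stated and checked rather than folded into the word ``equivalent''. The paper's hop-by-hop argument sidesteps this issue entirely, at the cost of proving a weaker (but sufficient) statement.
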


\begin{proof}
    By proving statement 4, which is, on one agent task pair \( (i^*, j^*) \), all agents will agree after \(D\) iterations. We can generalise this for all the agent task pairs while ensuring all the agents reach a consensus on their respective assignments. 
   
    Since we have assumed that consensus is synchronous in static environment, we can refer \(\tau\) as iteration \(t\). From statements 2 and 3, \( (i^*, j^*) \) is the agent task pair with the highest bid, which will be consistent throughout the later iterations. After iteration \(t\), agent \(i^*\) has no updated information, so it will not broadcast. As we assumed that only direct neighbours/one-hop neighbours could receive the information, in that \(t\) iteration, all the t-hop neighbours of the agent \(i^*\) will have this information; therefore, even though \(i^*\) stops broadcasting, all agents with this information will relay it to their immediate neighbours and so on. 

    Assuming \( (i^*, j^*) \) is the only update available to the agents, they will broadcast this information at least once. This will allow the information to pass through \(1\)-hop neighbours to \(2\)-hop neighbours and eventually to \(D\)-hop neighbours. Consequently, after \(D\) iterations, all agents will agree on the assignment \( (i^*, j^*) \). Hence, statement 4 holds for the Event-Driven Consensus-Based Bundle Algorithm (ED-CBBA). Thus, after \(D\) iterations, agreement on \( (i^*, j^*) \) is reached. According to statement 3, these entries will remain unchanged in subsequent iterations. This principle also applies to the \(m \in \mathcal{T}\) tasks, indicating that \(m\) tasks will be assigned conflict-free in \(mD\) iterations. Therefore, \(N_{min}\) tasks will be assigned in \textbf{\(N_{min}D\)} iterations.
\end{proof}

\subsection{Performance Bound}
The performance bound of CBBA is shown to guarantee 50\% optimality, assuming that agents possess accurate knowledge of SA and utilise a DMG scoring function. As we have considered a scoring function that is the same as in  \cite{choi2009consensus}, the modification in the communication mechanism preserves the performance properties of \cite{choi2009consensus}.


\section{Results and Discussion}

\begin{figure}
    \centering
    \includegraphics[height=5.2cm]{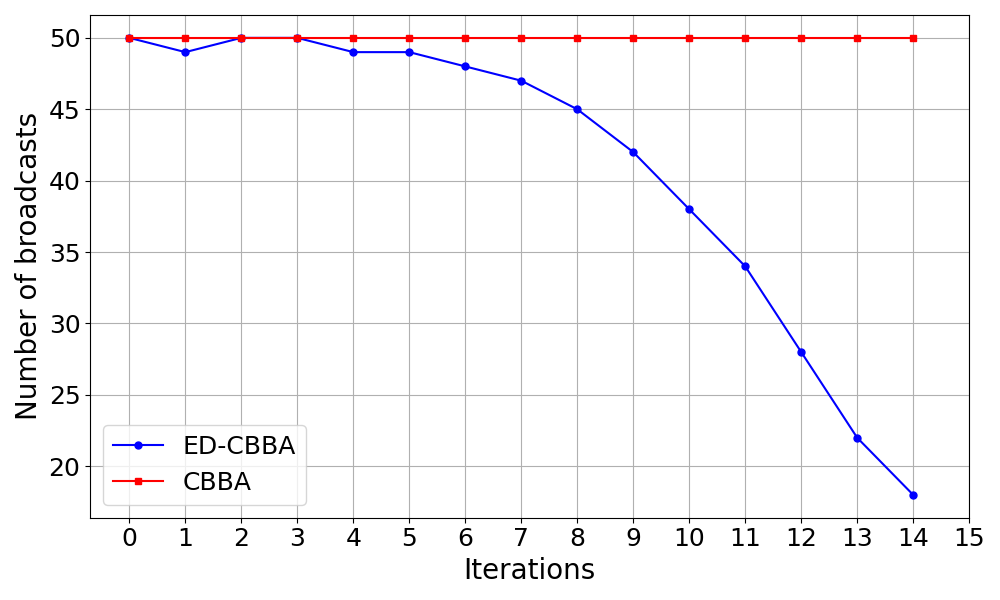}
    \caption{Number of broadcasts in each iteration: The red line represents CBBA, while the blue line represents ED-CBBA.}
    \label{fig:broadcast_each_iter}
\end{figure}


\begin{figure*}
    \centering
    \subfloat[]{%
        \includegraphics[width=0.2\textwidth]{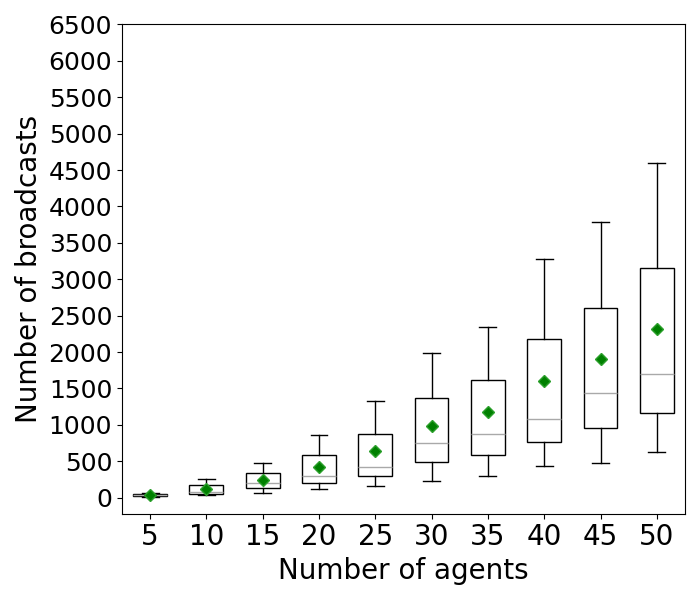}%
        }%
        \subfloat[]{%
        \includegraphics[width=0.2\textwidth]{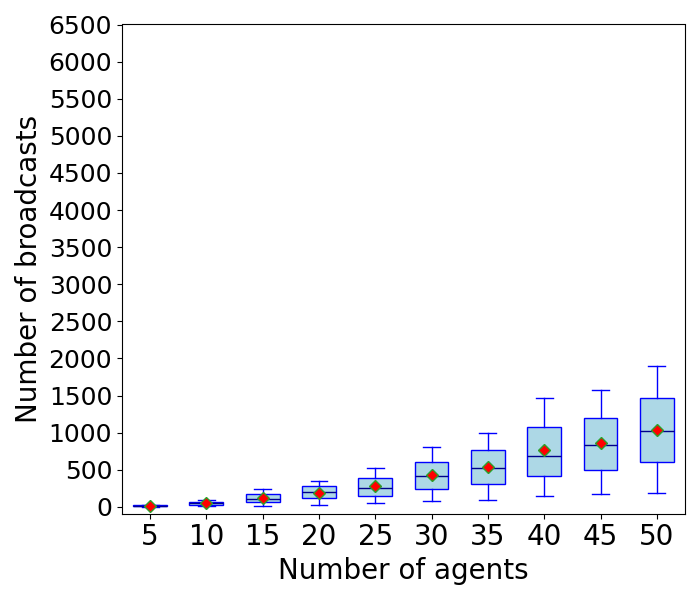}%
        }%
    \subfloat[]{%
        \includegraphics[height=3.0cm]{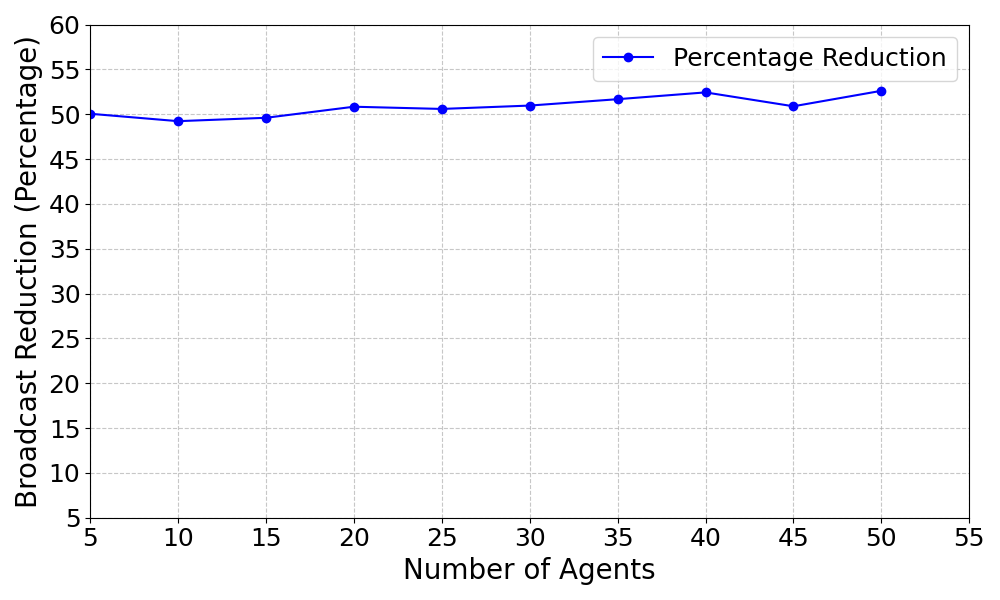}%
        }%
    \subfloat[]{%
        \includegraphics[height=3.0cm]{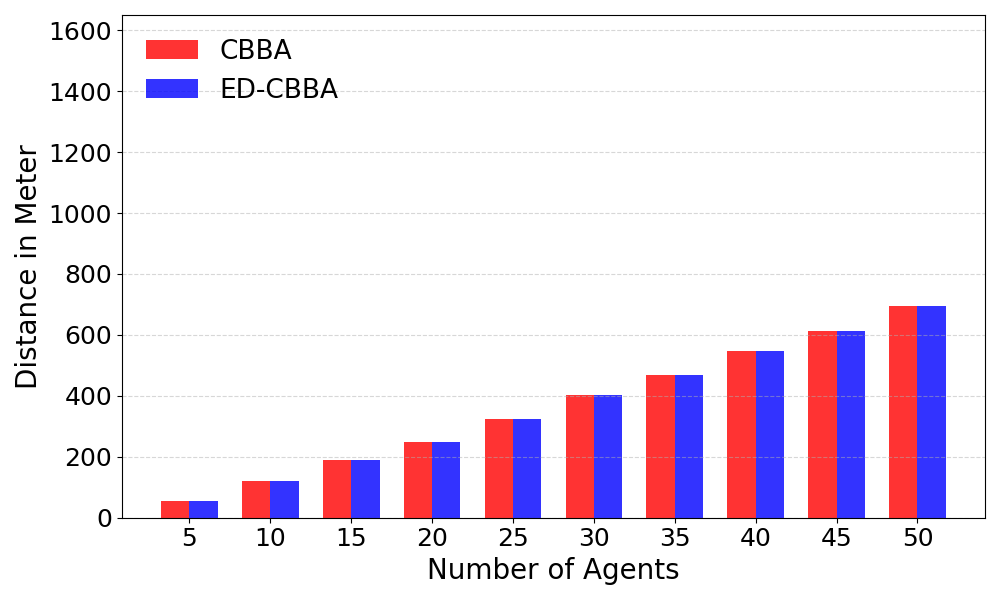}%
        }%
   \caption{Comparison of CBBA and ED-CBBA in terms of the number of broadcasts and task assignments for \(K = 1\) :
    (a) Box plot showing the number of broadcasts in CBBA,
    (b) Box plot showing the number of broadcasts in ED-CBBA,
    (c) Percentage reduction in broadcasts for ED-CBBA compared to CBBA, 
    (d) Total distance travelled by agents while completing tasks.}
    \label{fig:ed1}
\end{figure*}

\begin{figure*}
    \centering
    \subfloat[]{%
        \includegraphics[width=0.2\textwidth]{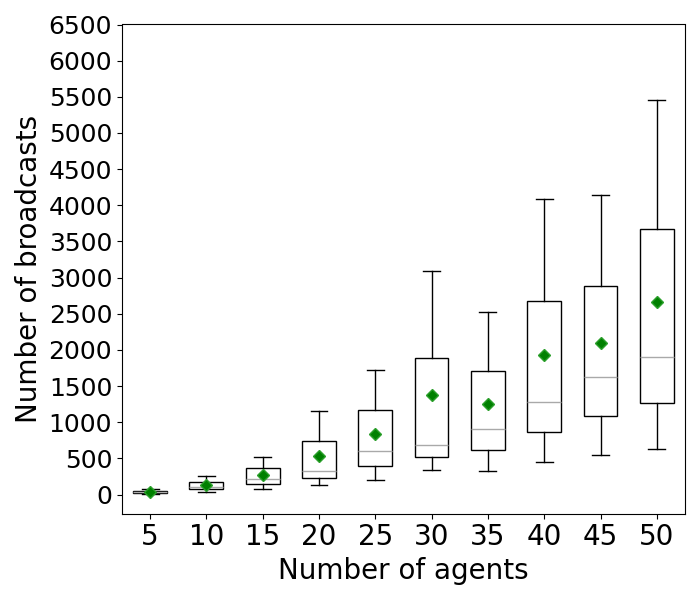}%
        }%
        \subfloat[]{%
        \includegraphics[width=0.2\textwidth]{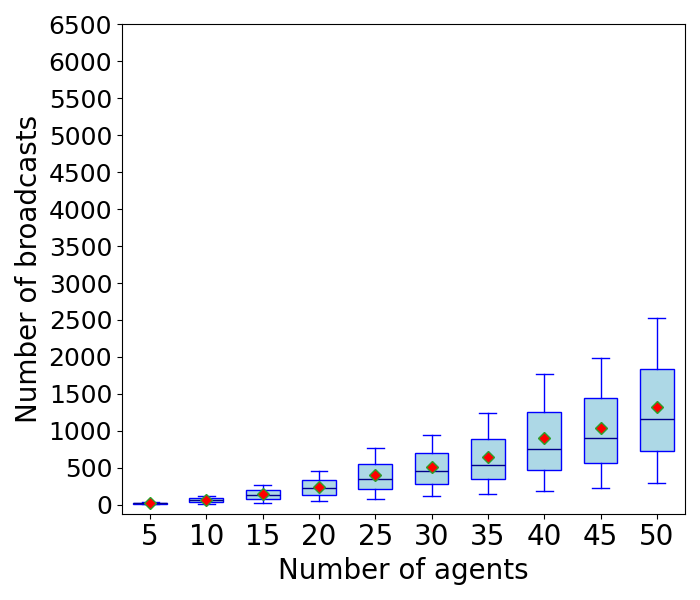}%
        }%
    \subfloat[]{%
        \includegraphics[height=3.0cm]{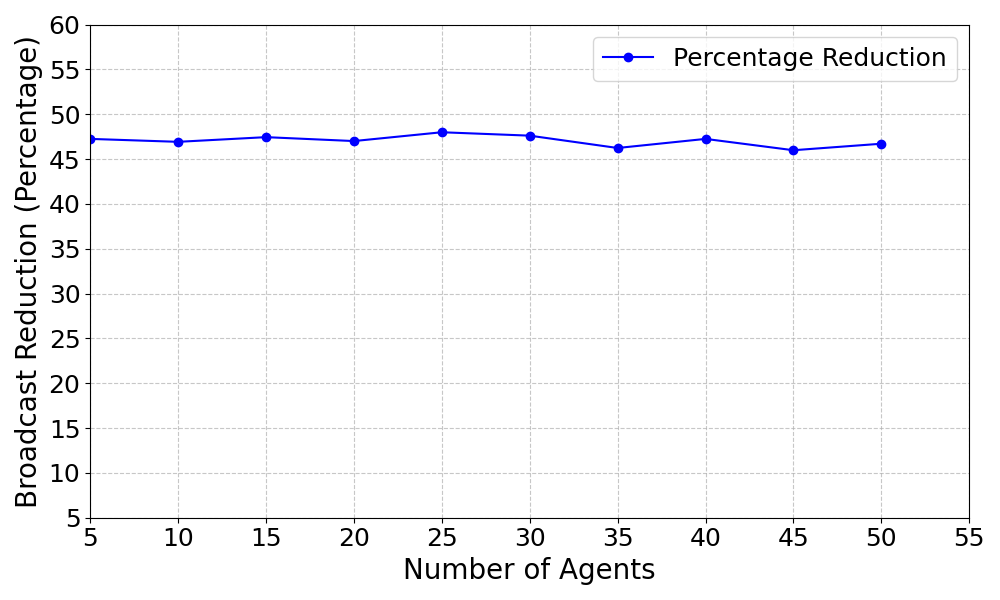}%
        }%
    \subfloat[]{%
        \includegraphics[height=3.0cm]{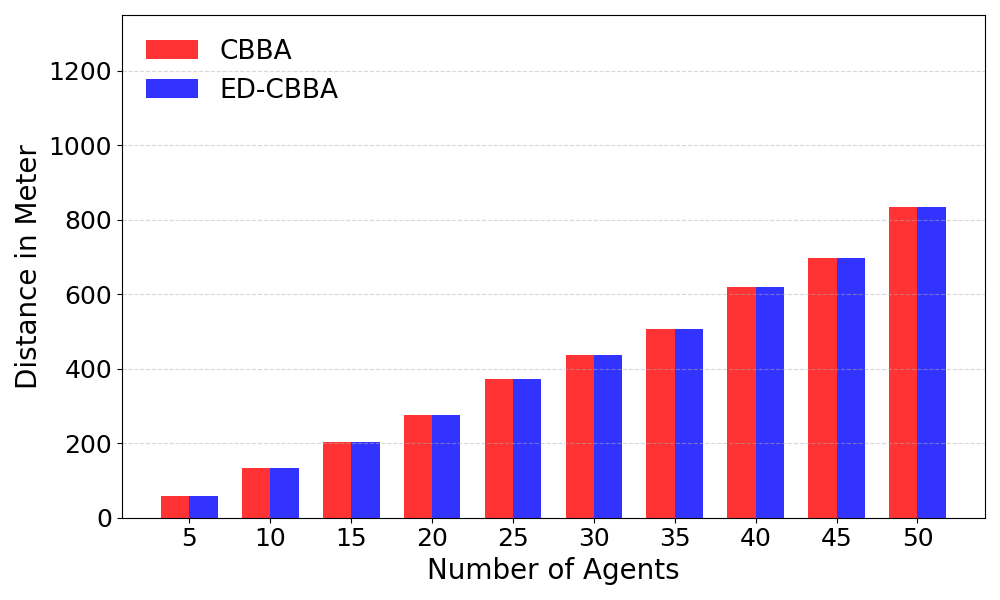}%
        }%
   \caption{Comparison of CBBA and ED-CBBA in terms of the number of broadcasts and task assignments for \(K = 2\) :
    (a) Box plot showing the number of broadcasts in CBBA,
    (b) Box plot showing the number of broadcasts in ED-CBBA,
    (c) Percentage reduction in broadcasts for ED-CBBA compared to CBBA, 
    (d) Total distance travelled by agents while completing tasks.}

    \label{fig:ed2}
\end{figure*}

\begin{figure*}
    \centering
    \subfloat[]{%
        \includegraphics[width=0.2\textwidth]{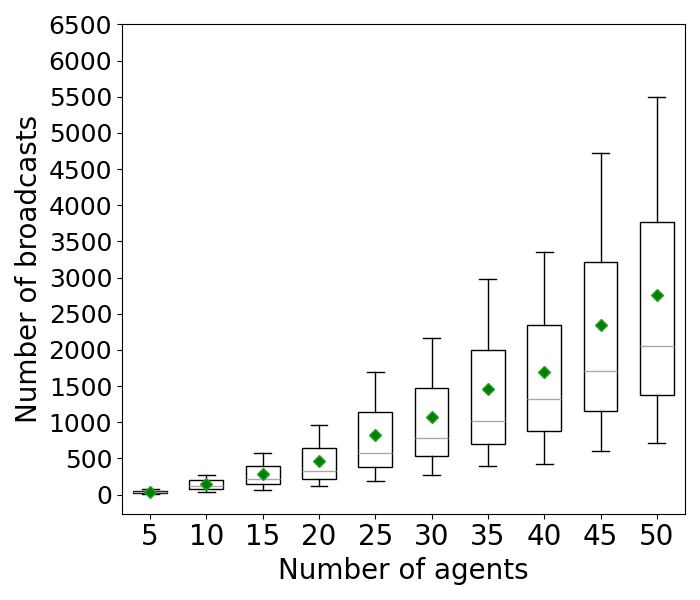}%
        }%
        \subfloat[]{%
        \includegraphics[width=0.2\textwidth]{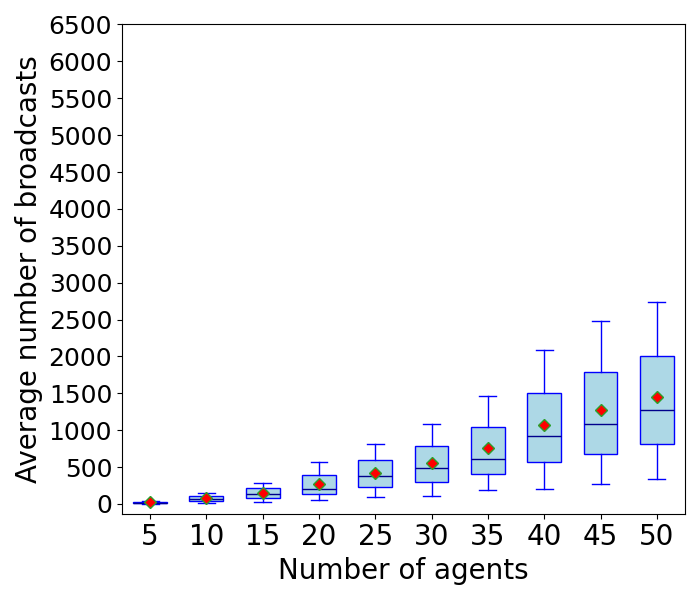}%
        }%
    \subfloat[]{%
        \includegraphics[height=3.0cm]{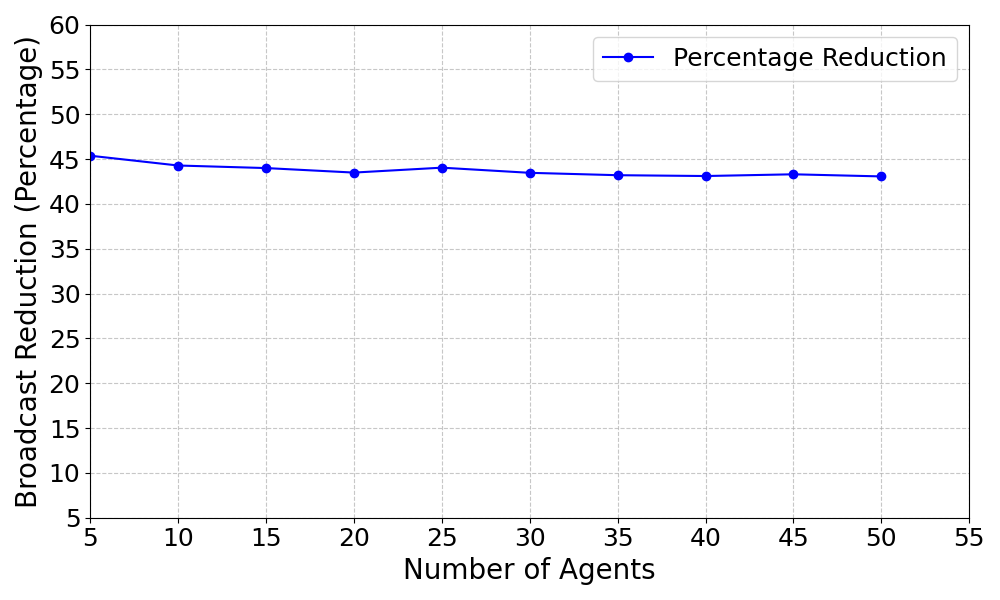}%
        }%
    \subfloat[]{%
        \includegraphics[height=3.0cm]{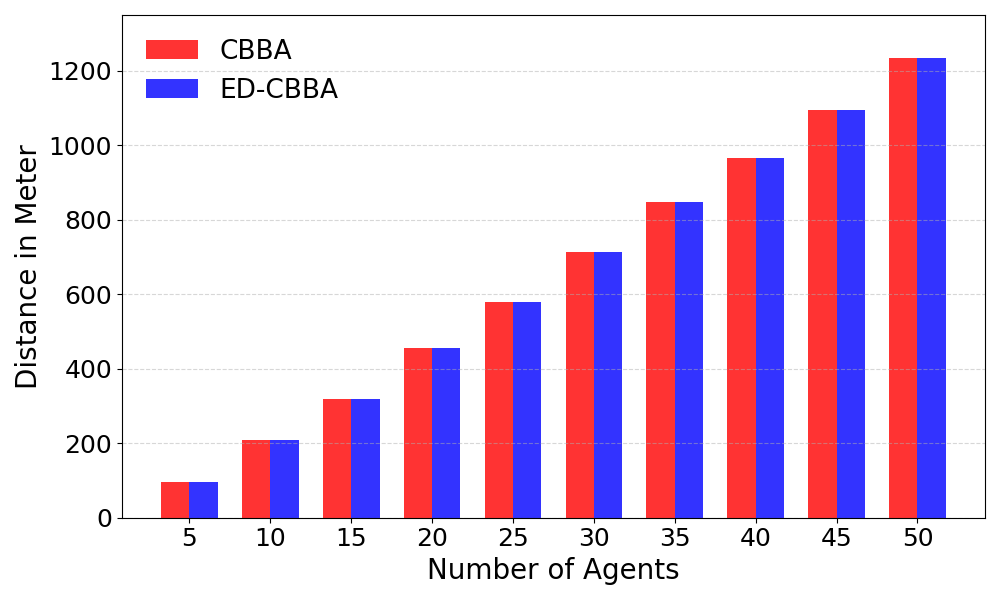}%
        }%
    \caption{Comparison of CBBA and ED-CBBA in terms of the number of broadcasts and task assignments for \(K = 3\) :
    (a) Box plot showing the number of broadcasts in CBBA,
    (b) Box plot showing the number of broadcasts in ED-CBBA,
    (c) Percentage reduction in broadcasts for ED-CBBA compared to CBBA, 
    (d) Total distance travelled by agents while completing tasks.}

    \label{fig:ed3}
\end{figure*}
We evaluate the performance of ED-CBBA through Monte Carlo simulations conducted in a \(100\,m \times 100\,m\) 2-D space, where both agents and tasks are generated. Neighbours are determined based on a 10-meter distance threshold, ensuring that every agent is within range of at least one other agent, thus forming a connected graph. To maintain this connectivity, agent locations are generated such that each agent is within 10 meters of at least one agent. Additionally, the base score for each task is set to 100. We varied the number of agents between 5 to 50 and tested with bundle sizes of \(K=1\) to \(K=8\), running 100 simulations for each configuration.

 In the standard CBBA model, every agent broadcasts a message during each iteration, resulting in \(N_a\) broadcasts for \(N_a\) agents, which contributes to a higher communication load. Figure \ref{fig:broadcast_each_iter} illustrates the broadcasts in each iteration for both algorithms. The red line represents the broadcasts for CBBA, which remain constant across all iterations. In contrast, the blue line, representing ED-CBBA, shows a decrease in broadcasts as the iterations progress, indicating that the number of broadcasts is not necessarily the same in each iteration. As the agents approach convergence, the broadcasts in ED-CBBA decrease.

To demonstrate the significant reduction in broadcasts without compromising task allocation, Figures~\ref{fig:ed1}, \ref{fig:ed2}, and \ref{fig:ed3} each illustrate four aspects for varying numbers of agents (\(N_a = 5\) to \(N_a = 50\)): the distribution (minimum, maximum, median, and mean) of the number of broadcasts in CBBA, the same in ED-CBBA, the percentage reduction in broadcasts achieved by ED-CBBA, and the total distance travelled by agents as a measure of task allocation cost. Figure \ref{fig:ed1} illustrates the broadcast for bundle length 1; the box plots in Figures \ref{fig:ed1}(a) and \ref{fig:ed1}(b) provide insights into the distribution of broadcasts for CBBA and ED-CBBA, respectively. The diamond marker represents the mean, while the horizontal line inside each box indicates the median. For ED-CBBA in Figure~\ref{fig:ed1}(b), the proximity of the mean and median suggests consistent performance with low variability in the number of broadcasts across simulations. In contrast, for CBBA in Figure~\ref{fig:ed1}(a), the larger gap between the mean and median highlights greater variability in broadcasts, likely due to fluctuations in the number of iterations required for convergence. Although CBBA and ED-CBBA require the same number of iterations, ED-CBBA achieves fewer broadcasts overall because not all agents communicate in every iteration. As evident in Figures~\ref{fig:ed1}(a) and (b), the average number of broadcasts in ED-CBBA is consistently lower than CBBA for all agent configurations. Figure~\ref{fig:ed1}(c) highlights the percentage reduction in broadcasts achieved by ED-CBBA, reaching up to 52\%, without compromising the allocation quality. To validate this claim, Figure~\ref{fig:ed1}(d) presents the total distance covered by agents during task completion. The results show that the average distance travelled by agents remains the same for both CBBA and ED-CBBA, demonstrating that ED-CBBA reduces communication overhead without impacting task allocation performance.

In this paper, we used the average distance travelled by agents as the primary metric because it directly reflects task allocation efficiency and overall mission performance. Since ED-CBBA primarily focuses on reducing communication overhead without altering task execution, we aimed to demonstrate that the task allocation quality remains consistent with that of CBBA. We acknowledge that other flight parameters may deteriorate due to variations in coordination strategies. However, our focus was on communication efficiency and task allocation stability, and our results indicate that ED-CBBA does not introduce additional travel costs.

\begin{figure}
    \centering
    \includegraphics[width=0.6\linewidth]{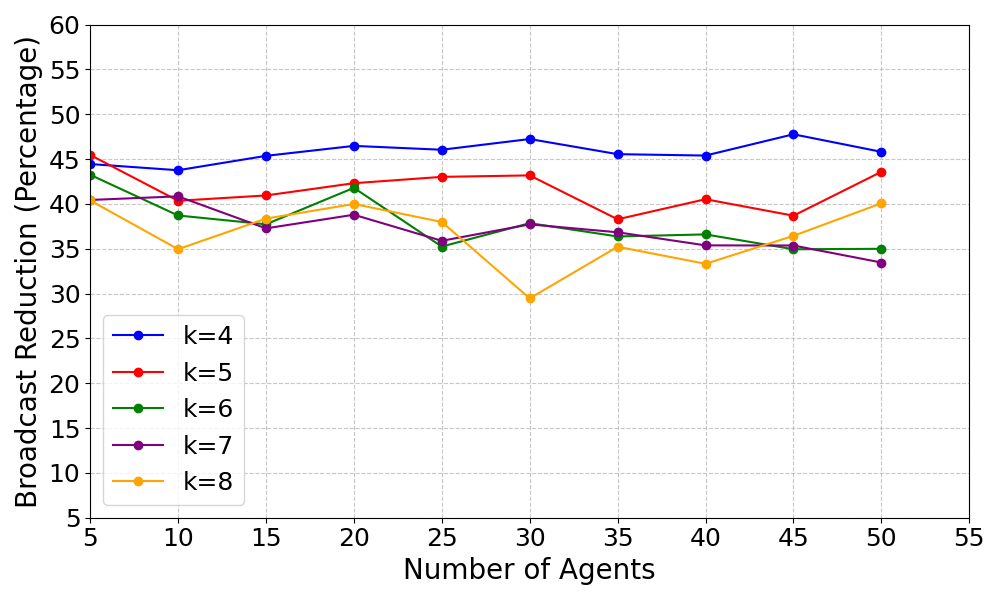}
    \caption{Reduction in broadcast in ED-CBBA with respect to CBBA in percentage for different bundle length, \(k = 4\) to \(k = 8\).}
    \label{fig:ed4}
\end{figure}
\begin{figure}[hb]
    \centering
    \includegraphics[width=0.6\linewidth]{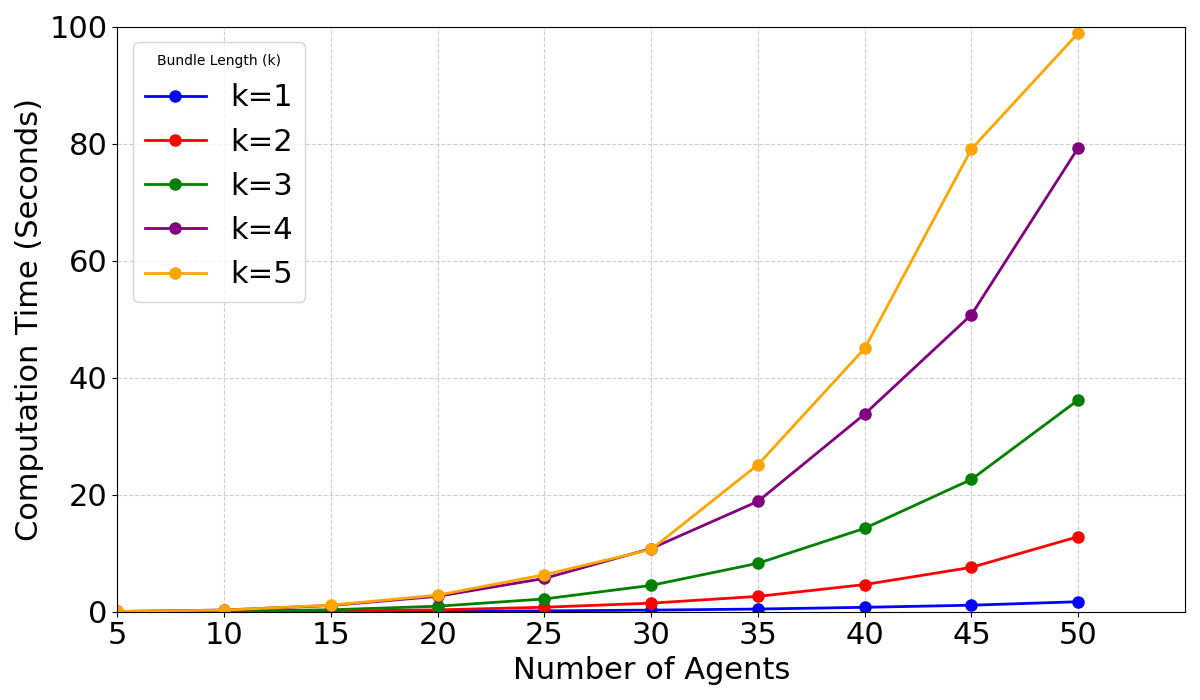}
    \caption{Computation time in seconds for the varying number of agents and bundle lengths \(k = 1\) to \(k = 5\).}
    \label{fig:cmp_time}
\end{figure}

We performed simulations for various values of bundle length \(K\). Figure \ref{fig:ed2} shows the results for a bundle length of 2, where we achieved a reduction of up to 48\%. For a bundle length of 3, as illustrated in Figure \ref{fig:ed3}, the reduction reached up to 45\%. Figure \ref{fig:ed4} displays the percentage reduction for bundle lengths ranging from \(K=4\) to \(K=8\), showing reductions of up to 40\%. These results highlight that ED-CBBA significantly reduces communication overhead, achieving up to \textbf{52\%} fewer broadcasts, across various agent configurations and bundle lengths, while maintaining the same task allocation performance as the standard CBBA.

\subsection{Computation Time}

All simulations and computations related to ED-CBBA were performed on a high-performance desktop system equipped with an AMD Ryzen 9 7950X (16 cores, 32 threads) and 128 GB RAM, ensuring that the results reflect the computational complexity of the approach. Since ED-CBBA performs offline task assignments, agents compute their bundles before execution. As shown in Figure~\ref{fig:cmp_time}, computation time increases with the bundle length and the number of agents. For smaller bundle lengths, computation remains below one second, while the highest observed time was 98.88 seconds for 50 agents with the maximum bundle length, which remains feasible for real-time execution.

We acknowledge that on-board computing devices used in UAVs often operate under more constrained computational and memory resources. While ED-CBBA has not been explicitly tested on Jetson hardware, performance differences are expected due to processing power and memory bandwidth variations. A detailed real-time evaluation of embedded platforms is an important direction for future work.

\section{Conclusion}
In this paper, we propose event-driven communication to reduce the number of communication messages to achieve consensus in the CBBA algorithm. The event-driven mechanism is shown to have reduced the number of messages to be broadcast significantly without additional computational complexity. The results demonstrate that the improved CBBA algorithm achieves similar task allocation performance to the original CBBA but with a 52\% reduction in communication requirements. We evaluated the algorithm through simulations, showing its effectiveness in reducing communication overhead.

The proposed algorithm can be further extended to an asynchronous communication framework, taking into account real-world network errors such as bit errors and packet loss. The algorithm can be integrated with NS-3 simulators to further study the effect of communication issues associated with multi-robot systems using CBBA.

\bibliographystyle{IEEEtran} 
\bibliography{references}

\end{document}